\newcommand{\paratitle}[1]{\vspace{1.5ex}\noindent\textbf{#1}}
\newtheorem{theorem}{Theorem}
  \providecommand\BibTeX{{%
    \normalfont B\kern-0.5em{\scshape i\kern-0.25em b}\kern-0.8em\TeX}}}
\begin{document}

\copyrightyear{2024}
\acmYear{2024}
\setcopyright{acmlicensed}\acmConference[WSDM '24]{Proceedings of the 17th ACM International Conference on Web Search and Data Mining}{March 4--8, 2024}{Merida, Mexico}
\acmBooktitle{Proceedings of the 17th ACM International Conference on Web Search and Data Mining (WSDM '24), March 4--8, 2024, Merida, Mexico}
\acmPrice{15.00}
\acmDOI{10.1145/3616855.3635840}
\acmISBN{979-8-4007-0371-3/24/03}

\title{Not All Negatives Are Worth Attending to: Meta-Bootstrapping Negative Sampling Framework for Link Prediction}

\author{Yakun Wang}
\authornote{Equal contributions.}
\affiliation{%
  \institution{Ant Group}
  \city{Beijing}
  \country{China}
  }
\email{feika.wyk@antgroup.com}

\author{Binbin Hu}
\authornotemark[1]
\affiliation{%
  \institution{Ant Group}
  \city{Hangzhou}
  \country{China}
  }
\email{bin.hbb@antfin.com}

\author{Shuo Yang}
\affiliation{%
  \institution{Ant Group}
  \city{Beijing}
  \country{China}
  }
\email{kexi.ys@antfin.com}

\author{Meiqi Zhu}
\affiliation{%
  \institution{Ant Group}
  \city{Beijing}
  \country{China}
  }
\email{zhumeiqi.zmq@antgroup.com}

\author{Zhiqiang Zhang}
\affiliation{%
  \institution{Ant Group}
  \city{Hangzhou}
  \country{China}
  }
\email{lingyao.zzq@antfin.com}

\author{Qiyang Zhang}
\affiliation{%
  \institution{Ant Group}
  \city{Shanghai}
  \country{China}
  }
\email{buxie.zqy@antgroup.com}

\author{Jun Zhou}
\affiliation{%
  \institution{Ant Group}
  \city{Beijing}
  \country{China}
  }
\email{jun.zhoujun@antfin.com}

\author{Guo Ye}
\affiliation{%
  \institution{Ant Group}
  \city{Hangzhou}
  \country{China}
  }
\email{yeguo.yg@antgroup.com}

\author{Huimei He}
\affiliation{%
  \institution{Ant Group}
  \city{Hangzhou}
  \country{China}
  }
\email{huimei.hhm@antgroup.com}

\renewcommand{\shortauthors}{Trovato and Tobin, et al.}

\begin{abstract}
The rapid development of graph neural networks (GNNs) encourages the rising of link prediction, achieving  promising performance with various applications.
Unfortunately, through a comprehensive analysis, we surprisingly find that current link predictors with dynamic negative samplers (DNSs) suffer from the migration phenomenon between ``easy'' and ``hard'' samples, which goes against the preference of DNS of choosing ``hard'' negatives, thus severely hindering capability.
Towards this end, we propose the MeBNS framework, serving as a general plugin that can potentially improve current negative sampling based link predictors.
In particular, we elaborately devise a Meta-learning Supported Teacher-student GNN (MST-GNN) that is not only built upon  teacher-student architecture for alleviating the migration between ``easy'' and ``hard'' samples but also equipped with a meta learning based sample re-weighting module for helping  the student GNN distinguish ``hard'' samples in a fine-grained manner. 
To effectively guide the learning of MST-GNN, we prepare a Structure enhanced Training Data Generator (STD-Generator) and an Uncertainty based Meta Data Collector (UMD-Collector) for supporting the teacher and student GNN, respectively. 
Extensive experiments show that the MeBNS achieves remarkable performance across six link prediction benchmark datasets.
\end{abstract}



\begin{CCSXML}
<ccs2012>
   <concept>
       <concept_id>10010147.10010178.10010187</concept_id>
       <concept_desc>Computing methodologies~Knowledge representation and reasoning</concept_desc>
       <concept_significance>300</concept_significance>
       </concept>
 </ccs2012>
\end{CCSXML}

\ccsdesc[300]{Computing methodologies~Knowledge representation and reasoning}

\keywords{link prediction, graph neural network, negative sampling}



\maketitle

\section{Introduction}
The prevalence of graph structures attracts a surge of investigation on graph data, enabling several downstream applications including personalized recommendation~\cite{ying2018graph,he2020lightgcn,hu2018leveraging,zang2023commonsense}, drug molecular design~\cite{drug_intro}, and supply chain finance~\cite{yang2021financial}. 
In general, these tasks could unfold as a typical instance of link prediction on graph data, which essentially depends on the effective characterization of interactive structure for target node pairs (i.e., edges). In recent years, graph neural networks (GNNs) \cite{gcn,geniepath,graphsage} have emerged as a promising direction for driving the rapid development of link prediction, which flexibly incorporates structural information into representation learning, and thus achieves dramatic performance~\cite{seal,pagnn,bellman-ford}.

In the typical flow of training a link predictor, only positive samples (i.e., observed edges) are involved, and considering all unobserved edges as negative samples is impractical for model learning in a real-world setting.   
Fundamentally, the quality of negative samples plays a decisive role in the capability of link predictors, which has recently spurred a fruitful line of related research \cite{yang2020understanding,chen2018fastgcn}. 
Earlier works commonly focus on static negative sampling with heuristic fixed distributions, including uniform~\cite{uniform1,uniform2} and degree-based distributions~\cite{pns,rendle2014improving}.  However, static negative samplers are incapable of adjusting the distributions of negatives within the training process, failing to explore more favorable negatives.
As a comparison, dynamic negative sampling (DNS) is receiving a growing amount of attention, which addresses the issue of dynamic adaptation. In particular, DNS helps link predictors prioritize so-called ``hard'' negatives based on predictions of the current model~\cite{dns3,noh2018improving},  structural correlations~\cite{recns,yang2020understanding,kgpolicy} and PageRank scores~\cite{pinsage,chen2019samwalker,wang2021samwalker++}. 

Despite the impressive improvement of DNS and its variants towards various applications, our empirical analysis on real-world datasets tells a rather different story. Specifically, we zoom into the capability of DNS in link prediction by conducting experiments on the Cora and CiteSeer datasets. 
{Surprisingly, our empirical analysis discloses that the migration phenomenon between ``easy'' and ``hard'' samples (defined in Sec. \ref{analysis})  goes against the core idea of DNS that only selects ``hard'' negatives in each iteration.} 
Intuitively, the issue may severely threaten the capability of the DNS based  link predictor, on account of abundant worthless optimizations during iteration. Therefore, we naturally concentrate on such an essential question:
\emph{Can we devise a bootstrapping~\footnote{In this paper, the term of \textit{bootstrap} is used in its idiomatic sense instead of the statistical sense.} framework tailored for improving the capability of negative sampler in link predictors by alleviating the migration phenomenon while retaining its original merits?}

In light of this discovery, the bottleneck of designing such a desired bootstrapping framework mainly lies in 
i)  alleviating the migration phenomenon
 and ii) encouraging link predictors to distinguish ``hard'' samples in a fine-grained manner. 
Towards this end, we propose a \textbf{Me}ta-\textbf{B}ootstrapping \textbf{N}egative \textbf{S}ampling framework (\textbf{MeBNS}), which could be plugged in current negative samplers with different link predictors for promising prospects. 
In particular, the Meta learning Supported Teacher-student GNN (\textbf{MST-GNN}) serves as the heart of the MeBNS framework, which is equipped with two designs to correspondingly address the above challenges: 
i) teacher-student architecture\footnote{We just reuse the ``teacher-student'' term in studies of knowledge distillation~\cite{hinton2015distilling,yim2017gift} for a clearer explanation, merely meaning the teacher GNN  distills ``hard'' samples to the student GNN in our paper.}. For alleviating the migration between ``easy'' and ``hard'' samples, it employs a well-trained teacher GNN to filter the ``easy'' samples and subsequently enforces a student GNN to specialize ``hard" samples.
ii) meta learning based sample re-weighting, which adopts a meta learner to impose samples with learnable meta weights, hopefully helping the student GNN distinguish ``hard'' samples in a fine-grained manner.
Moreover, the training pipeline of the teacher GNN is supported by the Structure enhanced Training Data Generator (\textbf{STD-Generator}), which flexibly injects structural information into current negative sampler.
Meanwhile, we  come up with an Uncertainty based Meta Data Collector (\textbf{UMD-Collector}) for guiding the learning of meta learner in the student GNN by collecting meta data with high confidence based on Dropedge based graph augmentation. 
To summarize, we make the following contributions. 
\begin{itemize}[leftmargin=*]
    \item \textbf{Inspirational Insights:} We highlight the critical demands of alleviating the migration phenomenon in current negative samplers for link prediction through empirical analysis. The issue severely hinders the capability of link predictors, which is unexplored in previous works.

    \item \textbf{General Framework:} We propose  a novel meta-bootstrapping negative sampling framework (MeBNS) for alleviates the intractable migration phenomenon.
    It is noteworthy that MeBNS not only theoretically guarantees a better optimization landscape, but also potentially serves as a general plugin, benefiting  existing negative samplers with various link predictors.
    \item \textbf{Multifaceted Experiments:} We conduct extensive experiments on six benchmark datasets, demonstrating that MeBNS surpasses several state-of-the-art negative samplers with  three classical link predictors as backbones. In-depth analysis reveals the contributions of MeBNS towards alleviating the migration phenomenon and specializing in  ``hard'' samples.
\end{itemize}

\section{Preliminary and Analysis}
In this section, we first summarize the prevalent paradigm of a link predictor. Then we give an empirical analysis of the migration issue in DNS~\cite{dns1}, which potentially motivates the design of the Meta-Bootstrapping Negative Sampling (\textbf{MeBNS}) framework.

\subsection{Recap Link Predictor}
Let $\mathcal{G} = \left(\mathcal{V},\mathcal{E} \right)$ be an undirected graph consisting of the node set $\mathcal{V}$ and the edge set $\mathcal{E}$. In practice, each node  $v$ is commonly associated with a feature vector $\bm{x}_v \in \mathbb{R}^f$, where $f$ is the number of raw features. The goal of link prediction is learning a prediction function $\mathcal{F}(u,v|\mathcal{G}; \theta)$ to estimate the likelihood of whether there is a connection between the target node pair $(u, v)$, which commonly involves an encoder and a decoder component. 

\paratitle{Encoder}
With the recent advent of graph neural networks (GNNs), a predominant encoder of link prediction is the use of node-centric method~\cite{GCNbased,graphsage}, which applies the recursive neighborhood aggregation and representation updating on the whole graph (i.e. $\mathcal{G}$).
Specifically, we describe the detailed iterations of a GNN for a node $v$ as follows:
\begin{equation}
    \begin{split}
        \bm{h}^{(l+1)}_{v} & = \textit{Encoder}(\bm{h}_v^{(l)}|\mathcal{G}; \theta^{\mathcal{G}}) \\
                           & = \textit{UPD}(\bm{h}_v^{(l)}, \textit{AGG}(\bm{h}_{v'}^{(l)}|{v' \in \mathcal{N}_v}; \theta^{\mathcal{G}, \mathcal{A}}); \theta^{\mathcal{G}, \mathcal{U}}), \\
    \end{split}
\end{equation}
where $\bm{h}_v^{(l)}, \bm{h}_v^{(l+1)}$ respectively denote the node representations as the $l$ and $l+1$-th layer, and $\bm{h}_v^{(0)}$ is initialized with its original feature vector $\bm{x}_v$, and $\mathcal{N}_{v}$ represent the neighbors of node $v$. Here, $\textit{Encoder}(\cdot)$ is formally parameterized by $\theta^{\mathcal{G}}$, which, in particular, is comprised of the $\theta^{\mathcal{G}, \mathcal{A}}$-parameterized aggregation function $\textit{AGG}(\cdot)$ and $\theta^{\mathcal{G}, \mathcal{U}}$-parameterized updating function  $\textit{UPD}(\cdot)$.
As the core of GNNs, several efforts have been made towards the design of the aggregation and updating function~\cite{gat,geniepath}.

\paratitle{Decoder}
After $L$-layers iterations, a predictive decoder is built upon the final representations (i.e. $\bm{h}^{(L)}_{v}$) for each node pair $(u, v)$ as follows: 
\begin{equation}
    \textit{Decoder}(u,v|\mathbf{H}; \theta^{\mathcal{P}}) = \sigma(\bm{h}^{(L)}_{u}, \bm{h}^{(L)}_{v}; \theta^{\mathcal{P}}),
\end{equation}
where $\mathbf{H} \in \mathbb{R}^{|\mathcal{V}| \times d}$ is the collection of graph’s learned node representations with $d$-dimension in a matrix form.  $\sigma(\cdot)$ is implemented as the inner product in our paper, which supports fast retrieval.



Following common strategies in previous works \cite{kumar2020link}, a good link predictor could be effectively trained with cross entropy function with negative sampling in a point-wise manner:
\begin{equation}
    \min_{\theta} =\frac{1}{|\mathcal{O}|} \sum_{(u, v, y_{u,v}) \in \mathcal{O}}{\mathcal{L}(\mathcal{F}(u, v|\mathcal{G}; \theta), y_{u, v})},
\end{equation}
where the training data $\mathcal{O} = \mathcal{O}^+ \cup   \mathcal{O}^-$  includes observed edges as positive samples $\mathcal{O}^+ = \{(u, v, y_{u,v})| (u, v) \in \mathcal{E}\}$ and unobserved edges as negative samples  $\mathcal{O}^- =  \{(u, v, y_{u,v})| (u, v) \in  \mathcal{V} \times \mathcal{V} \backslash \mathcal{E}\}$, where $y_{u,v} = 1$ in $\mathcal{O}^+$ and $y_{u,v} = 0$ in  $\mathcal{O}^-$. Since $|\mathcal{O}^-|$ is usually huge, making the model training impractical, the design of an effective negative sampler has served as the fundamental driving force behind the promising performance of link predictors.


On the other hand, edge-centric link predictor has recently made exceptional progress, including SEAL~\cite{seal}, GRAIL~\cite{grail}, PaGNN~\cite{pagnn} and NBFNet~\cite{bellman-ford}. Specifically,  this line of methods enables interactive structure learning by extracting enclosing subgraph $\mathcal{G}_{u, v}$ for each pair of node $(u, v)$, whose encoder could be formalized as  $\text{Encoder}(\bm{h}_v^{(l)} | \mathcal{G}_{u,v}; \theta^{\mathcal{G}})$. 
It is worthwhile mentioning that our MeBNS framework is model-agnostic, which can naturally benefit both node- and edge-centric link predictors. For brevity, we introduce the MeBNS framework with the backbone of node-centric link predictor, which could be easily analogized to edge-centric backbones.
\\

\begin{figure}[htbp]
        \centering
        \subfigure[AUC and Loss with DNS sampler ]{\includegraphics[width=1.67in]{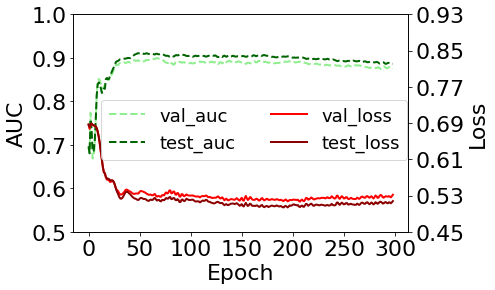}}  
        \subfigure[Impact of removing ``easy'' samples ]{\includegraphics[width=1.5in]{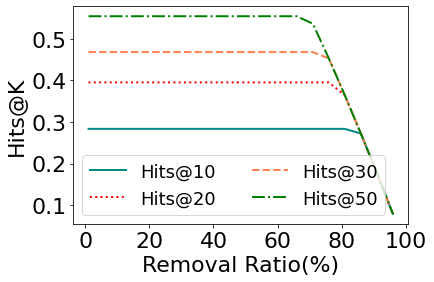}} 
        \subfigure[Migration between 100-th and 150-th Epoch]{\includegraphics[width=1.6in]{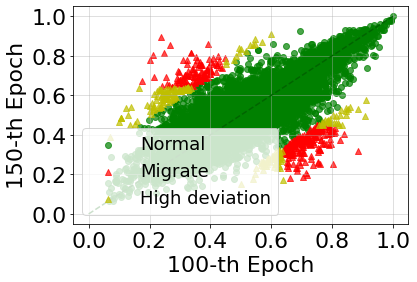}}
        \subfigure[Migration between 150-th and 250-th Epoch]{\includegraphics[width=1.6in]{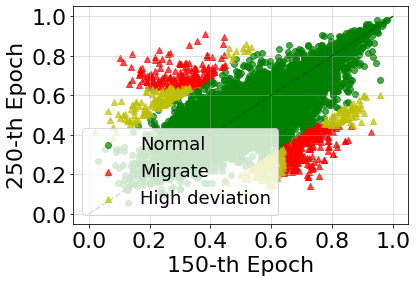}}
        \caption{Empirical analysis with DNS sampler.}
        \label{fig:motivation_anly}
\end{figure}

\subsection{Empirical Analysis on Negative Sampler}
\label{analysis}
Intuitively, the performance of link prediction greatly hinges on the quality of the negative sampler since only observed edges are involved in the training process.  
Compared to static samplers in earlier works \cite{uniform1,pns,rendle2014bayesian}, recent evidence have demonstrated the superior performance of the dynamical negative sampling (DNS) strategy, which dynamically
collects samples with high prediction score to form the set of negatives $\mathcal{O}^-$. These strategies are mainly based on predictions of the current models~\cite{dns1,wang2017irgan}, graph structures~\cite{yang2020understanding,recns}, GAN~\cite{wang2017irgan,cai2017kbgan,wang1711graphgan,hu2019adversarial} and mixing techniques~\cite{huang2021mixgcf,mixmethod}.
We direct the readers to Sec.~\ref{sec_ap_rel} for a comprehensive discussion of related work toward link prediction and negative sampling. 
Although DNS and its variants have been widely adopted in various applications due to its dramatic performance, our empirical analysis on real-world datasets tells a rather different story.

As a motivated example, we conduct a link prediction experiment with DNS on the Cora dataset, and corresponding experimental settings are detailed in the ``Experiment'' and ``Appendix'' parts.  Firstly, we present the learning curve of the link predictor in Fig 1 (a), {in terms of the change of the loss and performance in the validation and test set. From the plot, we find the model rapidly converges within about 100 epochs and then the loss  remains at a certain level.} To further explore whether the link predictor with DNS sampler has converged to the optimal condition,  
we record the prediction scores in several specific epochs (i.e. Epoch 100, 150, and 250) and illustrate their correlations in Fig. 1 (c) and (d). Concretely, a sample in the figure is marked in green when the gap of its prediction score between two epochs is less than a fixed threshold, and marked in yellow or red otherwise~\footnote{In the experiment, we set the threshold of the gap as 0.3.}. In the ideal situation, samples are expected to be distributed near the diagonal (i.e., all samples are marked as green dot (``\textcolor{green}{\CIRCLE}'') in Fig. 1 (c) and (d)) if the model has converged. Unfortunately, things go contrary to our wishes since there exist a number of samples with high deviation, marked as yellow (``\textcolor{yellow}{\UParrow}'') or red (``\textcolor{red}{\UParrow}'') triangle.  What's worse, DNS based methods have an intractable migration phenomenon, which means that a sample may be an ``easy'' or ``hard'' negative in a certain epoch (e.g., Epoch 100) while being an ``hard'' or ``easy'' negative in the subsequent epoch (e.g., Epoch 150), and we mark these samples in red as to distinguish from yellow. Here, in order to understand the migration between ``hard'' and ``easy'' samples more intuitively which are marked in red, we specify the definition that ``hard'' samples are those mistakenly considered as false positives ~\footnote{We define negative samples greater than the threshold as false positives. The threshold here is the optimal value from ROC Curve.} and  ``easy'' samples  are the remaining 50\% of the tail true negatives with lower prediction scores and farther from the decision boundary.
In summary, the above empirical analysis provides a surprising observation and the corresponding inspiration.

\paratitle{Observation}
Surprisingly, the migration phenomenon between ``easy'' and ``hard'' goes against the core idea of DNS (i.e. only select ``hard'' negatives in each iteration). It makes the DNS based link predictor mainly focuses on the worthless optimization after a few epochs, resulting in sub-optimal performance. 

\paratitle{Inspiration}
{A natural idea to tackle migration issues is enforcing the link predictor to optimize the ``hard'' samples after a certain converged state since ``easy'' samples may contribute little in the later stages of optimization. Fortunately, the observation in Fig. 1(b) also gives the encouraging result that the performance would be completely unaffected when 65\% of ``easy'' samples are removed.}

\section{The Proposed Framework}
In this section, we dive into the elaboration of the proposed novel MeBNS framework, as presented in Fig.~\ref{fig:MeBNS_framework}.

\begin{figure}[t]
    \centering
    \includegraphics[width=3.4in]{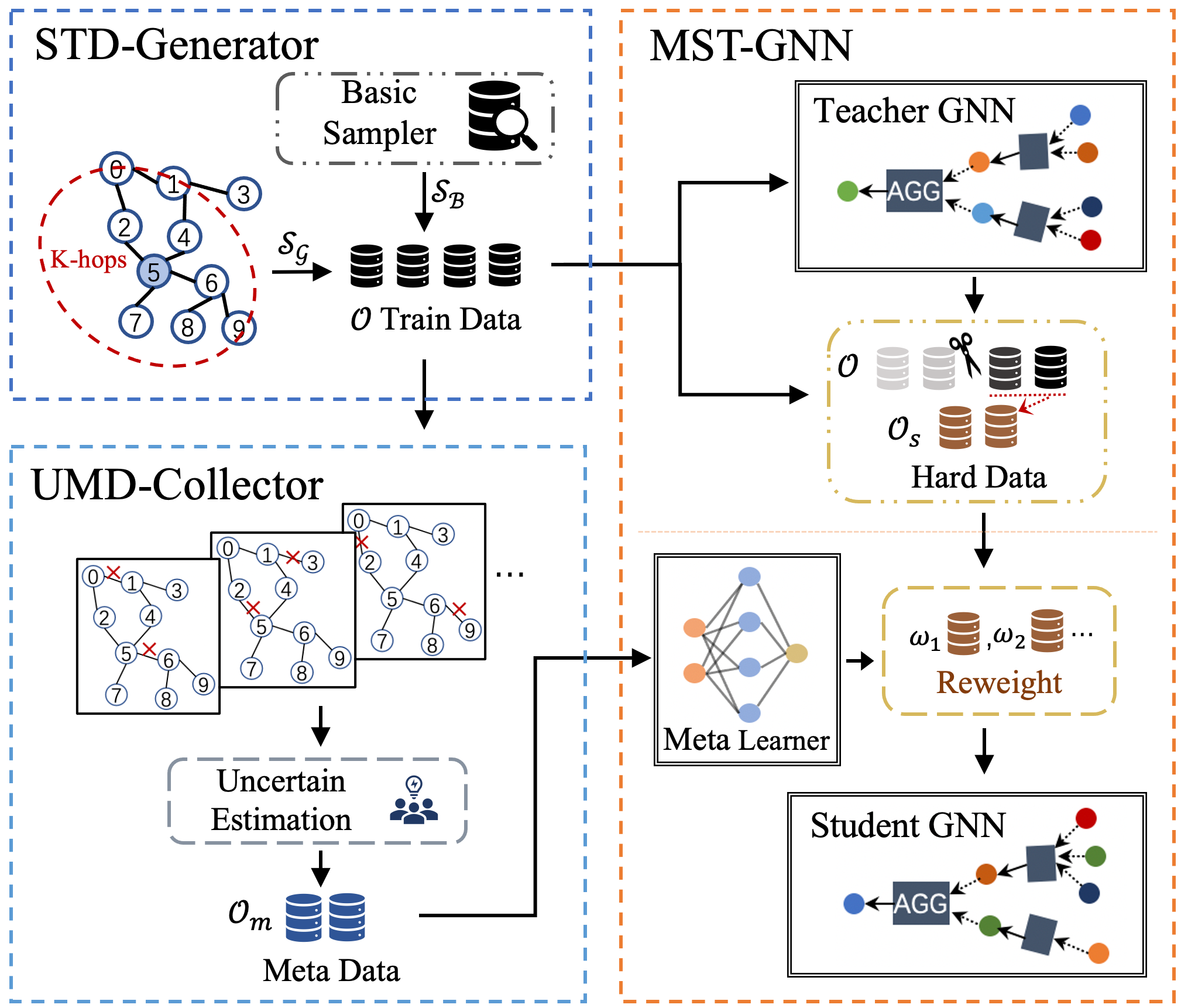}
    \caption{Overview of the proposed MeBNS model.}
    \label{fig:MeBNS_framework}
\end{figure}

\subsection{Overall Pipeline of Meta learning Supported Teacher-student GNN (MST-GNN)}
With the encouraging inspiration derived from our empirical analysis of current negative samplers, the core of the MeBNS framework is the elaborate MST-GNN architecture, which employs a teacher GNN to filter ``easy'' negatives, while a student GNN, equipped with a well-designed meta learner, is applied to further improve the capability on ``hard'' negative with learnable meta weights.

In particular, our MST-GNN involves two GNNs with the same architecture, detailed in the ``Preliminary'' part. Since our MeBNS framework is model-agnostic and can be directly plugged into current link predictors, we would not present its specific instantiation in this paper and just denote the teacher and student GNN as $\mathcal{F}^{t}(u,v|\mathcal{G}; \theta_{t})$ and $\mathcal{F}^{s}(u,v|\mathcal{G}; \theta_{s})$, respectively.
As mentioned above, the teacher GNN is responsible for converging to a stable state and then filtering a part of ``easy'' samples. Hence, the optimal teacher GNN can be easily obtained by minimizing the following loss:
\begin{equation}
\theta_{t}^{*} = \mathop{argmin} \limits_{\theta_{t}} \frac{1}{|\mathcal{O}|} \sum_{(u, v, y_{u,v}) \in \mathcal{O}} \mathcal{L}(\mathcal{F}^{t}(u,v|\mathcal{G}; \theta_{t}),y_{u,v}) 
\label{func:teacher_loss}.
\end{equation}
Here, the capability of the teacher GNN greatly hinges on the construction of training data (i.e., $\mathcal{O}$), which is derived from STD-Generator.

\subsection{Structure Enhanced Training Data Generator (STD-Generator)}
In the training data (i.e., $\mathcal{O} = \mathcal{O}^+ \cup \mathcal{O}^-$), following the common way in previous works, we regard the observed edges in graphs as positive samples (i.e.,  $\mathcal{O}^+$). In terms of the negatives (i.e., $\mathcal{O}^-$), we directly inherit the merits from the current negative sampler and adopt them as the base negative sampler, denoted as $\mathcal{S}_\mathcal{B}(u)$ with the center node $u$. On the other hand, with the usefulness of the structure prior in graphs, STD-Generator comes up with injecting more informative negatives with the help of the graph structure. Specifically, we denote the graph based negative sampler as $\mathcal{S}_\mathcal{G}(u)$ for the center node ${u}$, which aims at collecting $K$-hops neighbors for node $u$. Hence, for a center node $u$, its negatives $v'$ could be obtained by combining the base and graph negative sampler with a probability of $\delta$ as follows:
\begin{equation}
    \begin{cases}
v' \sim  \mathcal{S}_\mathcal{G}(u), & \text{with the probability of $\delta$}; \\
v' \sim \mathcal{S}_\mathcal{B}(u), & \text{with the probability of $1 - \delta$},
\end{cases}
\end{equation}
where $\delta$ is a hyper-parameter, controlling the proportion of negatives from the graph based negative sampler.

\subsubsection{Filtering ``easy'' Samples for the Student GNN}
The aforementioned observations have suggested that removing a part of ``easy'' samples would alleviate the unsatisfying migration phenomenon in current negative samplers for link prediction without harming the performance, and hopefully benefit the student GNN for focusing on ``hard'' samples.
Specifically, given a well-trained teacher model $\mathcal{F}^{t}(u,v|\mathcal{G}; \theta^{T}_{t})$ after $T$-epochs iteration, we firstly perform inference on train data $\mathcal{O}$, denoting the prediction result as $\mathcal{R}$,  and then  construct the sample set for the student GNN as follows:
\begin{equation}
    \mathcal{O}_s = \{(u, v, y_{u,v}) \in \mathcal{O} | \mathcal{R}_{u, v} \geq \zeta(\mathcal{R}, \beta)\},
\label{func:data_split}
\end{equation}
where $\zeta(\mathcal{R}, \beta)$ finds value of the $\lfloor  \beta |\mathcal{O}| \rfloor$-largest prediction score from  $\mathcal{R}$, where $\beta$ is a hyper-parameter controlling the ``hardness'' of samples transferred to the student GNN.

As mentioned above, since the student GNN utilizes the same structure as the teacher GNN, and thus we naturally initialize its parameters $\theta_{s}$ with  $\theta^{T}_{t}$. In order to help the student GNN well distinguish ``hard'' negative, we introduce a meta learner $\mathcal{Z}(u,v|\mathcal{G}; \Delta)$ to emphasize important samples in an adaptive manner. Hence, the optimization of student GNN can be formalized as follows:
\begin{equation}
    \theta_{s}^{*}(\Delta) 
= \mathop{argmin} \limits_{\theta_{s}} \frac{1}{|\mathcal{O}_s|} \sum_{(u, v, y_{u,v}) \in \mathcal{O}_s} \omega_{u,v}*\mathcal{L}(\mathcal{F}^{s}(u,v|\mathcal{G}; \theta_{s}),y_{u,v}) ,
\label{student_orginal}
\end{equation}
where $\omega_{u,v}$ is automatically learned via the meta learner  $\mathcal{Z}(u,v|\mathcal{G}; \Delta)$ to emphasize and balance ``hard'' samples in the link prediction for the student GNN.
 
\subsection{Diving into the Meta Learner for Sample Re-weighting in the Student GNN}
Hopefully, the meta learner is designed to help the student GNN to locate a more crisp decision boundary. Actually, meta learning \cite{shu2019meta,ren2018learning} has become a fast-growing research area in recent years, which is demonstrated to have rapid adaptability to new tasks only with few data and support robust training on biased data.
Here, we borrow the main idea to automatically learn a $\Delta$-parameterized re-weighting function $\mathcal{Z}(u,v|\mathcal{G}; \Delta)$ tailored for link prediction in a meta-learning manner.
Specifically, due to its superior capability of complicated interaction, we implement the meta learner \cite{shu2019meta} as two layers MLPs with a ReLU activation function and an output layer with the sigmoid function.

\subsubsection{Uncertainty based Meta Data Collector (UMD-Collector)}
Generally, meta data is crucially important to guide the learning of the meta learner, and samples with high confidence are prone to benefit the stability of the learning procedure. Inspired by the observations in Fig.~\ref{fig:motivation_anly} (c), (d), we find samples with high confidence are always corresponding to the low deviation between two epochs, i.e., low uncertainty. Hence, we shift attention towards  DropEdge based graph augmentation, which could be interpreted as an estimation of uncertainty \cite{liu2022confidence}. Formally, we describe the augmentation function $\Gamma(\cdot)$ as follows:
\begin{equation}
    \Gamma(\mathcal{G}) = \{\mathcal{V}, \mathcal{E} \odot \bm{Q}(\rho)\} \ \ \ \ \rho \sim \text{Gaussian}(0, 1),
\end{equation}
where $\bm{Q}(\rho) \in \{0, 1\}^{|\mathcal{E}|}$ is a mask vector, i.e. each edge is
discarded from the graph with the probability of $\rho$ from a Gaussian distribution. 
Following the core idea of Monte-Carlo sampling, the above augmentation is executed $N$ times and we perform link prediction under each augmented graph $\mathcal{G}' \sim \Gamma(\mathcal{G})$ as 
$p_{u,v}(\mathcal{G'}) = \mathcal{F}^{t}(u,v|\mathcal{G}'; \theta_{t})$. Subsequently,
the uncertainty for a node pair $(u, v)$ is estimated as follows:
\begin{equation}
    \begin{split}
         \mathcal{B}_{u, v} &= \mathbb{E}_{\mathcal{G}' \sim \Gamma(\mathcal{G})}(p_{u,v}(\mathcal{G'}) -  \mathbb{E}_{\mathcal{G}' \sim \Gamma(\mathcal{G})}(p_{u,v}(\mathcal{G'}))^2 \\
         &\approx \frac{1}{N}\sum_{\mathcal{G}' \sim \Gamma(\mathcal{G})}p^2_{u,v}(\mathcal{G'}) -  \frac{1}{N^2}{(\sum_{\mathcal{G}' \sim \Gamma(\mathcal{G})}p_{u,v}(\mathcal{G'}))^2}\\
    \end{split}
\end{equation}
Therefore, the meta data is constructed as $\mathcal{O}_m = \{(u, v, y_{u, v}) \in \mathcal{O} | \mathcal{B}_{u,v} < \tau\}$, where $\tau$ is a fixed threshold.

\subsubsection{Towards the Optimization of Student GNN with Meta Data}

Since the above student optimization in Eq.~\ref{student_orginal} consists of meta learner parameters, the optimization of student GNN can be decomposed into two nested optimizations. The inner optimization aims to optimize the meta learner based on a single-step update of the student and the outer optimization performs optimizations on the student with the latest meta parameters.

\paratitle{Inner Optimization}
Following the normal meta manner, we first explicitly calculate a single-step gradient on ``hard'' data for further performing second-order gradient optimization on meta data. The explicit calculation is as follows:
\begin{equation}
\small
\theta_{s}^{'}(\Delta)=\theta_{s}^{k} - \nabla \frac{1}{|\mathcal{O}_s|} \sum_{(u, v, y_{u,v}) \in \mathcal{O}_s} \omega_{u,v}^{k}* \mathcal{L}(\mathcal{F}^{s}(u,v|\mathcal{G}; \theta_{s}^{k}),y_{u,v}),\\
\label{func:inner_opt}
\end{equation}
where $\theta_{s}^{k}$, $\omega_{u,v}^{k}$ denote the optimal student parameters and sample weights respectively at step k.

Then the meta learner optimize by using meta data $\mathcal{O}_m$ with the following equation as $\Delta$ is contained in $\theta_{s}^{'}(\Delta)$:
\begin{equation}
\small
\Delta^{k+1}=\Delta^{k} - \nabla \frac{1}{|\mathcal{O}_m|} \sum_{(u, v, y_{u,v}) \in \mathcal{O}_m} \mathcal{L}(\mathcal{F}^{s}(u,v|\mathcal{G}; \theta_{s}^{'}(\Delta),y_{u,v}).
\end{equation}

\paratitle{Outer Optimization}
The outer optimization of the student can be calculated through the weighted loss defined in Eq.\ref{student_orginal} with the latest sample weights $\omega_{u,v}^{k+1} = \mathcal{Z}(u,v|\mathcal{G}; \Delta^{k+1})$.
\begin{equation}
\small
\theta_{s}^{k+1}=\theta_{s}^{k} - \nabla \frac{1}{|\mathcal{O}_s|} \sum_{(u, v, y_{u,v}) \in \mathcal{O}_s} \omega_{u,v}^{k+1} * \mathcal{L}(\mathcal{F}^{s}(u,v|\mathcal{G}; \theta_{s}^{k}),y_{u,v}).\\
\label{func:meta_opt}
\end{equation}


\begin{algorithm}[t]
    \caption{Training Pipeline of MeBNS Framework}
    \label{alg:algorithm}
    \begin{algorithmic}[1] 
    \REQUIRE{Graph $\mathcal{G}=(\mathcal{V},
    \mathcal{E})$, Positive Data $\mathcal{O}^{+}$}, Teacher GNN: $\mathcal{F}^{t}(u,v|\mathcal{G}; \theta_{t})$, Student GNN: $\mathcal{F}^{s}(u,v|\mathcal{G}; \theta_{s})$, Meta Learner: $\mathcal{Z}(u,v|\mathcal{G}; \Delta)$\\
    \ENSURE{ Well-trained student GNN: $\mathcal{F}^{s}(u,v|\mathcal{G};\theta_{s})$}
        \STATE \textcolor{red}{\textbf{/** Teacher GNN Training **/}}
        \FOR {epoch $<$ $T$ }
        \STATE $\mathcal{O}^-  \gets \textbf{STD-Generator} $ 
        \STATE \textit{Training data} $\mathcal{O} = \{\mathcal{O}^+ \cup \mathcal{O}^-\} $ 
        \STATE Update Teacher GNN $\theta_{t}$ on $\mathcal{O}$ with Eq.~\ref{func:teacher_loss}
        \ENDFOR
        \STATE \textcolor{blue}{/** \textbf{Student GNN Training} **/}
        \STATE Initialize Student GNN $\theta_{s} \gets \theta_{t}^{T}$  
        \STATE $\mathcal{O}_m  \gets \textbf{UMD-Collector} $
        \WHILE {not converge}
        \STATE $\mathcal{O}^-  \gets \textbf{STD-Generator} $
        \STATE \textit{Training data} $\mathcal{O} = \{\mathcal{O}^+ \cup \mathcal{O}^-\} $
        \STATE Obtain $\mathcal{O}_s \sim \mathcal{O}$ with Eq.~\ref{func:data_split}
        
        \STATE Update $\theta_{s}$ and $\Delta$ on \{$\mathcal{O}_s$, $\mathcal{O}_m$\} with Eq.~\ref{func:inner_opt}-Eq.~\ref{func:meta_opt}.
        \ENDWHILE
    \end{algorithmic}
\end{algorithm}

\subsection{Why It Works}
\subsubsection{Theoretical Justification}

For the sake of convenience, in our theoretical analysis, we choose to maximize the average utility~\cite{hacohen2019power} $\mathcal{U}_\theta({\mathcal{G}}) = e^{-\mathcal{L}_\theta(\mathcal{G})}$ of the observed graph for choosing the optimal hypothesis (i.e., GNN) for link prediction. 
For the teacher GNN $\mathcal{F}^{t}(u,v|\mathcal{G}; \theta)$, the optimal $\theta^*$ from the data could be computed by maximizing the following formulation:
\begin{equation}
    \mathcal{R}^t(\theta) = \mathbb{E}^t(\mathcal{U}_\theta({\mathcal{G}})) = \sum_{(u, v) \in \mathcal{O}}\mathcal{U}_\theta(u, v) \triangleq \sum_{(u, v) \in \mathcal{O}}e^{-\mathcal{L}_\theta(u, v)} \\
\end{equation}

In terms of the student GNN, we regard it as imposing a prior distribution on the observed data $\mathcal{O}$ for data sampling (e.g., DNS). Hence, this can be formalized as follows: 
\begin{equation}
    \mathcal{R}^s(\theta) = \mathbb{E}^s(\mathcal{U}_\theta({\mathcal{G}})) = \sum_{(u, v) \in \mathcal{O}}p_{u, v} \cdot \mathcal{U}_\theta(u, v) \triangleq \sum_{(u, v) \in \mathcal{O}}p_{u, v} \cdot e^{-\mathcal{L}_\theta(u, v)} \\
    \label{equ_stu}
\end{equation}

Then, we have the following theorem to depict the merit of the proposed MeBNS, potentially contributing to the learning.
\begin{theorem}
With the ideal sampling strategy $\mathcal{P}$ on observed data space $\mathcal{O}$ satisfying  (i) the non-increasing property of the difficulty level of $(u, v)$ pair  and 
(ii) the positive correlation with the optimal utility $\mathcal{U}_{\theta^*}$,
compared to  $\mathcal{F}^{s}(u,v|\mathcal{G}; \theta)$, the optimization landscape of $\mathcal{F}^{t}(u,v|\mathcal{G}; \theta)$ is inclined to enlarge the difference between the optimal parameters
and all other parameter values whose covariance with the optimal solution is smaller than the variance of the optimum. More specifically,
\begin{equation}
    \mathcal{R}^s(\theta^*) - \mathcal{R}^s(\theta) \geq   \mathcal{R}^t(\theta^*) - \mathcal{R}^t(\theta), \ \ \ \  \forall \theta:\ \  \text{Cov}(\mathcal{U}_{\theta^*}, \mathcal{U}_{\theta}) \leq \text{Var}(\mathcal{U}_{\theta^*}) \\
\end{equation}
\end{theorem}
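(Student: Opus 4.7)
My plan is to reduce the two-sided inequality of the theorem to a single comparison of empirical covariances between the sampling weights and the utilities, and then use conditions (i)--(ii) together with the stated covariance bound to close that comparison. To begin, I would adopt the natural normalization $\sum_{(u,v)\in\mathcal{O}} p_{u,v} = |\mathcal{O}|$ under which the uniform strategy $p\equiv 1$ recovers $\mathcal{R}^{t}$; this puts $\mathcal{R}^{t}$ and $\mathcal{R}^{s}$ on the same footing. A direct algebraic manipulation of the definitions in Eq.~\ref{equ_stu} then gives the identity
\begin{equation}
[\mathcal{R}^{s}(\theta^{*}) - \mathcal{R}^{s}(\theta)] - [\mathcal{R}^{t}(\theta^{*}) - \mathcal{R}^{t}(\theta)] = |\mathcal{O}|\bigl(\text{Cov}(p,\mathcal{U}_{\theta^{*}}) - \text{Cov}(p,\mathcal{U}_{\theta})\bigr),
\end{equation}
where $\text{Cov}$ and $\text{Var}$ are taken with respect to the uniform law on $\mathcal{O}$. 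The theorem therefore reduces to showing $\text{Cov}(p,\mathcal{U}_{\theta^{*}}) \geq \text{Cov}(p,\mathcal{U}_{\theta})$ whenever $\text{Cov}(\mathcal{U}_{\theta^{*}},\mathcal{U}_{\theta}) \leq \text{Var}(\mathcal{U}_{\theta^{*}})$.

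Next I would use conditions (i)--(ii) to expose the structural relationship between $p$ and $\mathcal{U}_{\theta^{*}}$. Reading ``difficulty'' as small $\mathcal{U}_{\theta^{*}}$, condition (i) makes $p$ a non-decreasing function of $\mathcal{U}_{\theta^{*}}$, and condition (ii) enforces positive correlation between the two. I would then perform an empirical orthogonal decomposition $p = \alpha\,\mathcal{U}_{\theta^{*}} + \gamma + r$ with $\mathbb{E}[r]=0$ and $\text{Cov}(r,\mathcal{U}_{\theta^{*}})=0$, so that $\alpha = \text{Cov}(p,\mathcal{U}_{\theta^{*}})/\text{Var}(\mathcal{U}_{\theta^{*}}) \geq 0$ by condition (ii). Substituting this decomposition into both covariances on the reduced inequality gives
\begin{equation}
\text{Cov}(p,\mathcal{U}_{\theta^{*}}) - \text{Cov}(p,\mathcal{U}_{\theta}) = \alpha\bigl(\text{Var}(\mathcal{U}_{\theta^{*}}) - \text{Cov}(\mathcal{U}_{\theta^{*}},\mathcal{U}_{\theta})\bigr) - \text{Cov}(r,\mathcal{U}_{\theta}),
\end{equation}
whose first bracketed factor is non-negative precisely by the hypothesis on $\theta$.

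The main obstacle is then to control the residual term $\text{Cov}(r,\mathcal{U}_{\theta})$. In the linear regime, where $p$ is an affine function of $\mathcal{U}_{\theta^{*}}$, we have $r \equiv 0$ and the proof is immediate; this is effectively the regime of Hacohen and Weinshall's curriculum-learning analysis that the paper invokes via~\cite{hacohen2019power}. For a general monotone $p$, the comonotone coupling between $p$ and $\mathcal{U}_{\theta^{*}}$ implied by (i)--(ii) lets me apply a Chebyshev sum (or FKG-type rearrangement) inequality pairwise over $\mathcal{O}$ to argue $\text{Cov}(r,\mathcal{U}_{\theta}) \leq \alpha\bigl(\text{Var}(\mathcal{U}_{\theta^{*}}) - \text{Cov}(\mathcal{U}_{\theta^{*}},\mathcal{U}_{\theta})\bigr)$. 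Combined with the previous display this yields the reduced covariance inequality, and unfolding the identity of the first step recovers the claim of the theorem.
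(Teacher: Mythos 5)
Your first two steps --- the normalization $\mathbb{E}[p]=1$, the identity reducing the theorem to the single comparison $\text{Cov}(p,\mathcal{U}_{\theta^*}) \geq \text{Cov}(p,\mathcal{U}_{\theta})$, and the remark that the affine case closes immediately --- are essentially the paper's own proof, stated with more care about normalization. The paper does not attempt your level of generality: it simply \emph{defines} the ideal strategy as $p_{u,v} \propto \exp(-\mathcal{L}_{\theta^*}(u,v))$, i.e.\ $p = \kappa\,\mathcal{U}_{\theta^*}$ with $\kappa = 1/\sum_{(u',v')\in\mathcal{O}}\exp(-\mathcal{L}_{\theta^*}(u',v')) > 0$, so that $\mathcal{R}^s(\theta) = \mathcal{R}^t(\theta) + \kappa\,\text{Cov}(\mathcal{U}_{\theta},\mathcal{U}_{\theta^*})$ holds with zero residual; subtracting this relation at $\theta^*$ and at $\theta$ then gives the claim directly under the hypothesis $\text{Cov}(\mathcal{U}_{\theta^*},\mathcal{U}_{\theta}) \leq \text{Var}(\mathcal{U}_{\theta^*})$. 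In your notation this is exactly the regime $r \equiv 0$; conditions (i)--(ii) in the statement are motivation for this particular choice of $\mathcal{P}$, not standalone hypotheses from which the paper derives the inequality.

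The genuine gap is your final step: the claimed bound $\text{Cov}(r,\mathcal{U}_{\theta}) \leq \alpha\bigl(\text{Var}(\mathcal{U}_{\theta^*}) - \text{Cov}(\mathcal{U}_{\theta^*},\mathcal{U}_{\theta})\bigr)$ for general monotone $p$ is not merely unproven but false. Chebyshev's sum inequality (or an FKG-type argument) requires the two sequences being correlated to be comonotone, and nothing forces $r$ and $\mathcal{U}_{\theta}$ to be comonotone: $\mathcal{U}_{\theta}$ is an arbitrary competitor constrained only by the covariance hypothesis, while $r$, being orthogonal to $\mathcal{U}_{\theta^*}$, is generically non-monotone in it. Concretely, take $|\mathcal{O}| = 3$ with $\mathcal{U}_{\theta^*} = (0.49,\,0.50,\,0.51)$ and $p = (0.01,\,0.02,\,2.97)$, which is non-decreasing in $\mathcal{U}_{\theta^*}$, positively correlated with it, and satisfies $\mathbb{E}[p]=1$. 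Then $\text{Var}(\mathcal{U}_{\theta^*}) \approx 6.7\times 10^{-5}$, $\alpha \approx 148$, and $r \approx (0.49,\,-0.98,\,0.49)$. Choosing $\mathcal{U}_{\theta} = (0.9,\,0.1,\,0.9)$ gives $\text{Cov}(\mathcal{U}_{\theta^*},\mathcal{U}_{\theta}) = 0 \leq \text{Var}(\mathcal{U}_{\theta^*})$, so the theorem's hypothesis holds, yet $\text{Cov}(r,\mathcal{U}_{\theta}) \approx 0.26 \gg \alpha\,\text{Var}(\mathcal{U}_{\theta^*}) \approx 0.0099$; consequently $\text{Cov}(p,\mathcal{U}_{\theta}) \approx 0.26 > \text{Cov}(p,\mathcal{U}_{\theta^*}) \approx 0.0099$, and by your own reduction identity $\mathcal{R}^s(\theta^*) - \mathcal{R}^s(\theta) < \mathcal{R}^t(\theta^*) - \mathcal{R}^t(\theta)$. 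So monotonicity plus positive correlation alone do not suffice: to repair the argument you must either adopt the paper's exact proportional form of $\mathcal{P}$ (your linear regime, where $r \equiv 0$) or add an explicit hypothesis controlling the residual, e.g.\ that $p$ is an affine function of $\mathcal{U}_{\theta^*}$ on $\mathcal{O}$.
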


\begin{proof}
Firstly, we rewrite the Eq.~\ref{equ_stu} as follows:
\begin{equation}
    \begin{split}
        \mathcal{R}^s(\theta) &=  \sum_{(u, v) \in \mathcal{O}}p_{u, v} \cdot \mathcal{U}_\theta(u, v) \\
        & = \sum_{(u, v) \in \mathcal{O}}(p_{u, v} - \mathbb{E}(\mathcal{P})) \cdot (\mathcal{U}_\theta(u, v) - \mathbb{E}(\mathcal{U}_{\theta}))  + |\mathcal{O}|\mathbb{E}(\mathcal{P}) \mathbb{E}(\mathcal{U}_{\theta}) \\
        & = \text{Cov}(\mathcal{U}_{\theta}, \mathcal{P}) + |\mathcal{O}|\mathbb{E}(\mathcal{P}) \mathbb{E}(\mathcal{U}_{\theta}) \\
        & \triangleq \mathcal{R}^t(\theta) + \text{Cov}(\mathcal{U}_{\theta}, \mathcal{P}), \\
        \label{eq_proof_1}
    \end{split}
\end{equation}
where $\text{Cov}(\mathcal{U}_{\theta}, \mathcal{P})$ represents the the covariance between the two random variables $\mathcal{U}_{\theta}$ and $\mathcal{P}$ on graph data space $\mathcal{O}$. 

Generally, the non-increasing sampling strategy $\mathcal{P}$ of the difficulty level of $(u, v)$ pair on observed data space $\mathcal{O}$ implies the positive correlation between  $\mathcal{P}$ and the utility $\mathcal{U}_{\theta}$. And in an ideal condition,  $\mathcal{P}$ is expected to be positively correlated to the utility with optimal parameters, i.e., $\mathcal{U}_{\theta*}$
In particular, we define the sampling strategy $\mathcal{P}$ in the ideal case as follows:

\begin{equation}
    p_{u, v} \propto \exp^{-\mathcal{L}_{\theta^*}(u,v)} / \sum_{(u', v') \in \mathcal{O}}{\exp^{-\mathcal{L}_{\theta^*}(u',v')}}.
\end{equation}
Then, we rewrite Eq.~\ref{eq_proof_1} as follows:
\begin{equation}
    \mathcal{R}^s(\theta) \triangleq \mathcal{R}^t(\theta) + \kappa \cdot \text{Cov}(\mathcal{U}_{\theta}, \mathcal{U}_{\theta^*}),
\end{equation}
where $\kappa = 1 / \sum_{(u', v') \in \mathcal{O}}{\exp^{-\mathcal{L}_{\theta^*}(u',v')}}$.

Naturally, $\mathcal{R}^s(\theta)$  with the optimal parameter $\theta^*$ can be denoted as follows:
\begin{equation}
    \begin{split}
        \mathcal{R}^s(\theta^*) & = \mathcal{R}^t(\theta^*) + \kappa \cdot \text{Cov}(\mathcal{U}_{\theta^*}, \mathcal{U}_{\theta^*}) \\
         & = \mathcal{R}^t(\theta^*) + \kappa \cdot \text{Var}(\mathcal{U}_{\theta^*}). \\
    \end{split}
    \label{eq_proof6}
\end{equation}
On the other hand, 
\begin{equation}
    \begin{split}
        \mathcal{R}^s(\theta) &=  \mathcal{R}^t(\theta) +  \kappa \cdot  \text{Cov}(\mathcal{U}_{\theta}, \mathcal{U}_{\theta^*}). \\
    \end{split}
     \label{eq_proof7}
\end{equation}
By combining Eq.~\ref{eq_proof6} and Eq.~\ref{eq_proof7}, we conclude:
\begin{equation}
    \begin{split}
         \mathcal{R}^s(\theta^*) - \mathcal{R}^s(\theta) &= \mathcal{R}^t(\theta^*) -  \mathcal{R}^t(\theta) + \kappa \cdot (\text{Var}(\mathcal{U}_{\theta^*}) -  \text{Cov}(\mathcal{U}_{\theta}, \mathcal{U}_{\theta^*})) \\
         & \geq \mathcal{R}^t(\theta^*) - \mathcal{R}^t(\theta) \\
    \end{split}
\end{equation}
from the facts (i) $\forall \theta:\ \  \text{Cov}(\mathcal{U}_{\theta^*}, \mathcal{U}_{\theta}) \leq \text{Var}(\mathcal{U}_{\theta^*})$ and (ii) $\kappa = 1 / \sum_{(u', v') \in \mathcal{O}}{\exp^{-\mathcal{L}_{\theta^*}(u',v')}}$ is surely positive.
\end{proof}

In short, the theorem clearly tells that our MeBNS framework generally obtains a better landscape of a link predictor, specifically making the optimization overall steeper. In particular, MeBNS is equipped with the remarkable ability to refine the optimization when the optimal parameter and other parameters are totally uncorrelated, and even negatively correlated. 
Hence, compared to traditional link predictors, MeBNS could achieve a further convergence with alleviating the migration phenomenon (See empirical evidence in Fig. \ref{fig_qua} (a) \& (c)). It is worthwhile to note the training pipeline of MeBNS naturally not only follows the non-increasing sampling strategy( i.e. ``easy'' (teacher GNN) $\rightarrow$ ``hard'' (student GNN)), but also dedicates to imposing the sampling strategy close to the ideal case by over-sampling ``hard'' samples for student GNN with a converged teacher GNN.

\subsubsection{Complexity Analysis\label{sec_ca}}
The overall training pipeline of  MeBNS framework is presented in Algorithm.~\ref{alg:algorithm}
Since MeBNS divides the whole training process into teacher and student phases, we separately analyze the time complexity of the two phases.
Taking a single iteration for example, the training time complexity of teacher GNN $\Upsilon_t \left(\mathcal{V},\mathcal{E} \right)$ is actually the same as the baseline $\Upsilon_b \left(\mathcal{V},\mathcal{E} \right)$.
Extra computation occurs in the student phase due to the secondary gradient calculation on meta data. 
As shown in Eq.~\ref{func:inner_opt}-Eq.~\ref{func:meta_opt}, two back-propagation operations are required. Without loss of generality, time complexity of MeBNS is near $ \Upsilon \left(\mathcal{V},\mathcal{E} \right)$ = $\kappa \cdot \Upsilon_b \left(\mathcal{V},\mathcal{E} \right)$, where $\kappa$ is a constant ($\leq 4$ or $\approx 4$).
In summary, the complexity of MeBNS is linearly comparable with the traditional negative sampling based link prediction. Through the evaluation on Pubmed dataset, we could observe $\kappa = 0.12$ with GCN backbone, $\kappa = 0.43 $ with GAT backbone and $\kappa = 4.1$ with SEAL backbone, which reveals the scalability and efficiency of the proposed MeBNS.  And a large OGB dataset is adopted for evaluation to prove this.

\section{Experiments}


\begin{table}[t]
    \caption{The statistics of the datasets.}
    \centering
     \setlength{\tabcolsep}{3.5mm}{
    \begin{tabular}{cccc}
        \toprule
       Dataset & Nodes & Edges & Features   \\
       \midrule
         Cora & 2, 708 & 10, 556 & 1, 433  \\
         CiteSeer & 3, 327 & 9, 104 & 3, 703 \\
         Pubmed & 19, 717 & 88, 648 & 500 \\
         Amazon Photo & 7, 650 & 238, 162 & 745  \\
         Facebook & 4, 039 & 88, 234 & 1, 283  \\
         OGB-DDI & 4, 267 & 1, 334, 889 & N.A.  \\
    \bottomrule 
    \end{tabular}}
    \label{tab:dataset}
\end{table}

\begin{table*}[t]
    \caption{Performance comparison on six datasets with GCN based backbone. We use the \colorbox[gray]{0.8}{shadow} to mark that MeBNS benefits the original samplers and the best performance is \textbf{\color{blue}{bolded in blue}}.}
    \centering
    \setlength{\tabcolsep}{1.2mm}{
    \begin{tabular}{c|c|cc|cc|cc|cc|cc|cc}
    \toprule
        {Datasets} & {Metrics} & Uniform & +MeBNS & PNS & +MeBNS & DNS & +MeBNS & SANS & +MeBNS & RecNS & +MeBNS & MCNS & +MeBNS \\
        \midrule
        \multirow{3}{*}{Cora} & Hits@20 & 0.3450 & \cellcolor[gray]{0.8}{0.5815} & 0.4094 & \cellcolor[gray]{0.8}{0.5023} & 0.4113 & \cellcolor[gray]{0.8}{0.5184} & 0.2009 & \cellcolor[gray]{0.8}{0.4748} & 0.2056 & \cellcolor[gray]{0.8}{0.2398} & 0.5535 & \cellcolor[gray]{0.8}{\textbf{\color{blue}{0.6075}}}       \\
                    & Hits@30 
                    & 0.4436 & \cellcolor[gray]{0.8}{0.6151} & 0.4928 & \cellcolor[gray]{0.8}{0.5772} & 0.4815 & \cellcolor[gray]{0.8}{0.5734} & 0.2682 & \cellcolor[gray]{0.8}{0.5289} & 0.2606 & \cellcolor[gray]{0.8}{0.2862} & 0.6388 & \cellcolor[gray]{0.8}{\textbf{\color{blue}{0.6540}}}        \\
                    & AUC  
                    & 0.9005 & \cellcolor[gray]{0.8}{0.9117} & 0.9147 & \cellcolor[gray]{0.8}{0.9224} & 0.9031 & \cellcolor[gray]{0.8}{0.9114} & 0.8349 & \cellcolor[gray]{0.8}{0.8825} & 0.8106 & \cellcolor[gray]{0.8}{0.7940} & 0.9206 & \cellcolor[gray]{0.8}{\textbf{\color{blue}{0.9302}}}       \\
        \midrule
        \multirow{3}{*}{CiteSeer} & Hits@20 & 0.4758 & \cellcolor[gray]{0.8}{0.5549} & 0.4747 & \cellcolor[gray]{0.8}{0.5406} & 0.4857 & \cellcolor[gray]{0.8}{0.5450} & 0.1516 & \cellcolor[gray]{0.8}{0.4362} & 0.2164 & \cellcolor[gray]{0.8}{0.3230} & 0.5670 & \cellcolor[gray]{0.8}{\textbf{\color{blue}{0.6241}}}       \\
                    & Hits@30 & 0.5395 & \cellcolor[gray]{0.8}{0.5846} & 0.5439 & \cellcolor[gray]{0.8}{0.6098} & 0.5890 & \cellcolor[gray]{0.8}{0.6043} & 0.2373 & \cellcolor[gray]{0.8}{0.4879} & 0.2769 & \cellcolor[gray]{0.8}{0.4054} & 0.6219 & \cellcolor[gray]{0.8}{\textbf{\color{blue}{0.6934}}}         \\
                    & AUC    &  0.8936 & \cellcolor[gray]{0.8}{0.9138} & 0.9191 & \cellcolor[gray]{0.8}{0.9244} & 0.9047 & \cellcolor[gray]{0.8}{0.9120} & 0.7874 & \cellcolor[gray]{0.8}{0.8439} & 0.8324 & \cellcolor[gray]{0.8}{0.8636} & 0.9319 & \cellcolor[gray]{0.8}{\textbf{\color{blue}{0.9399}}}       \\
        \midrule
        \multirow{3}{*}{Pubmed} & Hits@20 & 0.3333 & \cellcolor[gray]{0.8}{\textbf{\color{blue}{0.4487}}} & 0.2814 & 0.2787 & 0.3272 & \cellcolor[gray]{0.8}{0.3381} & 0.1327 & \cellcolor[gray]{0.8}{0.2898} & 0.0822 & \cellcolor[gray]{0.8}{0.0856} & 0.2824 & \cellcolor[gray]{0.8}{0.3018}       \\
                    & Hits@30 &  0.3925 & \cellcolor[gray]{0.8}{\textbf{\color{blue}{0.5004}}} & 0.3410 & 0.3378 & 0.4141 & 0.3921 & 0.1604 & \cellcolor[gray]{0.8}{0.3479} & 0.1034 & \cellcolor[gray]{0.8}{0.1053} & 0.3246 & \cellcolor[gray]{0.8}{0.3551}       \\
                    & AUC    & 0.9688 & \cellcolor[gray]{0.8}{\textbf{\color{blue}{0.9698}}} & 0.9588 & 0.9586 & 0.9556 & \cellcolor[gray]{0.8}{0.9593} & 0.9308 & \cellcolor[gray]{0.8}{0.9498} & 0.9139 & \cellcolor[gray]{0.8}{0.9160} & 0.9605 & 0.9565        \\
        \midrule
        \multirow{3}{*}{Photo} & Hits@20 & 0.1056 & \cellcolor[gray]{0.8}{0.1160} & 0.0509 & \cellcolor[gray]{0.8}{0.0642} & 0.0753 & \cellcolor[gray]{0.8}{0.0946} & 0.0604 & \cellcolor[gray]{0.8}{\textbf{\color{blue}{0.1290}}} & 0.0348 & 0.0304 & 0.0304 & \cellcolor[gray]{0.8}{0.0655}       \\
                    & Hits@30 & 0.1200 & \cellcolor[gray]{0.8}{0.1389} & 0.0689 & \cellcolor[gray]{0.8}{0.0798} & 0.1019 & \cellcolor[gray]{0.8}{0.1208} & 0.0721 & \cellcolor[gray]{0.8}{\textbf{\color{blue}{0.1609}}} & 0.0457 & 0.0395 & 0.0386 & \cellcolor[gray]{0.8}{0.0885}        \\
                    & AUC    & 0.9353 & \cellcolor[gray]{0.8}{\textbf{\color{blue}{0.9437}}} & 0.8507 & \cellcolor[gray]{0.8}{0.8541} & 0.8760 & \cellcolor[gray]{0.8}{0.9165} & 0.9010 & \cellcolor[gray]{0.8}{0.9112} & 0.8136 & 0.8120 & 0.8124 & \cellcolor[gray]{0.8}{0.8599}        \\
        \midrule
        \multirow{3}{*}{Facebook} & Hits@20 & 0.2216 & \cellcolor[gray]{0.8}{0.2453} & 0.0987 & \cellcolor[gray]{0.8}{0.1093} & 0.2432 & \cellcolor[gray]{0.8}{\textbf{\color{blue}{0.2511}}} & 0.0619 & \cellcolor[gray]{0.8}{0.2188} & 0.0588 & \cellcolor[gray]{0.8}{0.1066} & 0.2222 & \cellcolor[gray]{0.8}{0.2411}       \\
                    & Hits@30 & 0.2902 & \cellcolor[gray]{0.8}{\textbf{\color{blue}{0.3389}}} & 0.1595 & \cellcolor[gray]{0.8}{0.2016} & 0.3034 & \cellcolor[gray]{0.8}{0.3046} & 0.0888 & \cellcolor[gray]{0.8}{0.2873} & 0.0848 & \cellcolor[gray]{0.8}{0.1518} & 0.2997 & 0.2842        \\
                    & AUC    & 0.9866 & \cellcolor[gray]{0.8}{\textbf{\color{blue}{0.9894}}} & 0.9783 & \cellcolor[gray]{0.8}{0.9786} & 0.9797 & \cellcolor[gray]{0.8}{0.9813} & 0.9386 & \cellcolor[gray]{0.8}{0.9529} & 0.9352 & \cellcolor[gray]{0.8}{0.9423} & 0.9802 & 0.9706         \\
        \midrule
        \multirow{3}{*}{OGB-DDI} & Hits@20 & 0.4546 & \cellcolor[gray]{0.8}{\textbf{\color{blue}{0.5227}}} & 0.1714 & \cellcolor[gray]{0.8}{0.2333} & 0.2179 & \cellcolor[gray]{0.8}{0.2386} & 0.1327 & \cellcolor[gray]{0.8}{0.3700} & 0.3489 & \cellcolor[gray]{0.8}{0.4789} & 0.1548 & \cellcolor[gray]{0.8}{0.2214}      \\
                    & Hits@30 & 0.5420 & \cellcolor[gray]{0.8}{\textbf{\color{blue}{0.6222}}} & 0.2157 & \cellcolor[gray]{0.8}{0.2713} & 0.2619 & \cellcolor[gray]{0.8}{0.2850} & 0.1327 & \cellcolor[gray]{0.8}{0.5527} & 0.4211 & \cellcolor[gray]{0.8}{0.5257} & 0.2301 & \cellcolor[gray]{0.8}{0.2755}        \\
                    & AUC    & 0.9965 & \cellcolor[gray]{0.8}{\textbf{\color{blue}{0.9968}}} & 0.9910 & \cellcolor[gray]{0.8}{0.9911} & 0.9929 & \cellcolor[gray]{0.8}{0.9931} & 0.9941 & \cellcolor[gray]{0.8}{0.9950} & 0.9961 & \cellcolor[gray]{0.8}{0.9962} & 0.9912 & \cellcolor[gray]{0.8}{0.9918 }       \\
        
    \bottomrule
    \end{tabular}}
    \label{tab:main_expert}
\end{table*}

\paratitle{Datasets and Evaluation Metrics}
We evaluate the  proposed MeBNS on six benchmark datasets, including citation graphs~\footnote{https://github.com/kimiyoung/planetoid/raw/master/data} (i.e.,  \textbf{Cora, CiteSeer, Pubmed} \cite{coracitepb}),  co-purchase graphs~\footnote{https://github.com/shchur/gnn-benchmark
/blob/master/data/npz/amazon\_electro nics\_photo .npz} (i.e., \textbf{Amazon Photo}~\cite{photo} ), social graphs~\footnote{https://docs.google.com/uc?export=download\&id
=12aJWAGCM4IvdGI2fiydDNy WzViEOLZH8 \&confirm=t} (i.e.,  \textbf{Facebook}~\cite{facebook}) and drug-drug interaction networks (i.e., \textbf{OGB-DDI}~\cite{hu2020open}), which are publicly available.  The details of the datasets are present in Table~\ref{tab:dataset}.
Following normal settings~\cite{yang2020understanding}, we randomly split  the entire edges of each dataset 
into training (70\%), validation (10\%) and test set (20\%), and especially for OGB-DDI, we used the OGB official train/validation/test splits.
Following the evaluation of link prediction tasks in OGB \cite{hu2020open}, we adopt the widely-used Hits@K ($K = 20, 30$ in our paper) and AUC as the evaluation metrics.



\paratitle{Baselines}
Since MeBNS is a general plugin, we perform experiments on several  link predictors with different negative samplers.

\begin{itemize}[leftmargin=*]
    \item \textbf{Link Predictor}: node-centric link predictors (i.e., GCN~\cite{gcn}, GAT~\cite{gat}) and edge-centric link predictors (i.e., SEAL~\cite{seal} and GraIL~\cite{grail}).
    \item \textbf{Negative Sampler}: static samplers (i.e., Uniform~\cite{uniform1}, PNS~\cite{caselles2018word2vec}), dynamic negative samplers (i.e., DNS~\cite{dns1}), graph based samplers (i.e., SANS~\cite{sans}, RecNS~\cite{recns}, MCNS~\cite{yang2020understanding})
\end{itemize}


%
%

\begin{table}[htbp]  
    \caption{Performance comparison w.r.t. Hits@20 with different backbones under Uniform and MCNS sampler. `-' means the original sampler while `+' means its variant improved by the MeBNS.}
    \centering
    \setlength{\tabcolsep}{1.0mm}{
    \begin{tabular}{c|c|c|c|c||c|c|c}
    \midrule
    \multicolumn{2}{c|}{} &\multicolumn{3}{c||}{Uniform sampler} &  \multicolumn{3}{c}{MCNS sampler} \\
    \midrule
    \multicolumn{2}{c|}{Backbone} & GAT & SEAL &GraIL  & GAT & SEAL &GraIL \\
    \midrule
    \multirow{2}{*}{Cora} & -  & 0.2928 & 0.5563 & 0.6009  & 0.5819 & 0.5260 & 0.2864 \\
       & +  &  \cellcolor[gray]{0.8}{0.5298} &  \cellcolor[gray]{0.8}{0.5696} & \cellcolor[gray]{0.8}{0.7061}  &  \cellcolor[gray]{0.8}{0.6123} &  \cellcolor[gray]{0.8}{0.5270} &  \cellcolor[gray]{0.8}{0.2929}\\
       \midrule
       \multirow{2}{*}{CiteSeer} & - & 0.4615 & 0.5879 & 0.5263 & 0.5439 & 0.5483 & 0.1223 \\
       & +  & \cellcolor[gray]{0.8}{0.4890} & \cellcolor[gray]{0.8}{0.5978} & \cellcolor[gray]{0.8}{0.6599} & \cellcolor[gray]{0.8}{0.6010} & \cellcolor[gray]{0.8}{0.6274} & \cellcolor[gray]{0.8}{0.1480} \\
       \midrule
       \multirow{2}{*}{Pubmed} & - & 0.1149 & 0.4123 & 0.4641 & 0.1296 & 0.4011 & 0.1686  \\
       & +  & \cellcolor[gray]{0.8}{0.1701} & \cellcolor[gray]{0.8}{0.4615} & \cellcolor[gray]{0.8}{0.5203}  & {0.0927} & \cellcolor[gray]{0.8}{0.4350} & \cellcolor[gray]{0.8}{0.3967}\\
    \midrule
    \end{tabular}}
    \label{tab:bcakbone}
\end{table}

\begin{figure}[t]
        \centering
        \subfigure[Uniform sampler]{\includegraphics[width=1.6in]{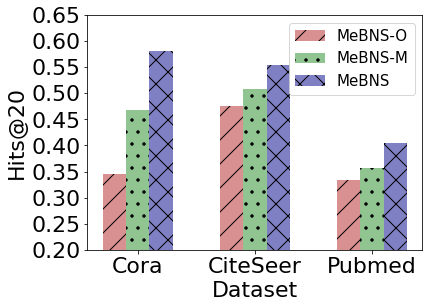}}
        \subfigure[MCNS sampler]{\includegraphics[width=1.6in]{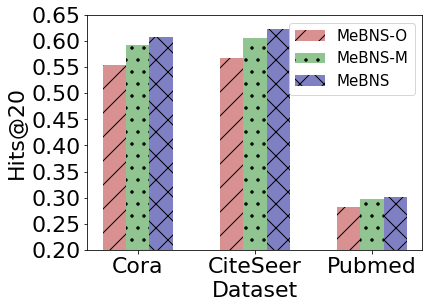}}
        \subfigure[Uniform sampler]{\includegraphics[width=1.6in]{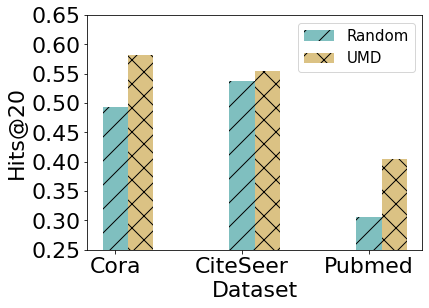}}
        \subfigure[MCNS sampler]{\includegraphics[width=1.6in]{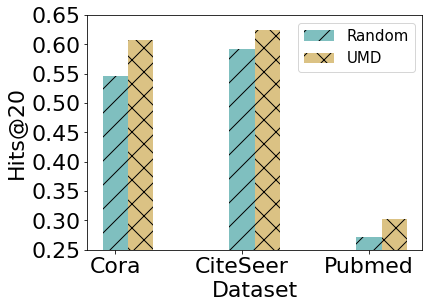}}
        \caption{(a), (b): Ablation studies of MeBNS. (c), (d): Impact of meta data collector.}
        \label{fig:xiaorong}
\end{figure}

\paratitle{Parameter Settings} 
We perform hyperparameters tuning on the validation set with the widely-used grid search strategy, and the optimal parameters as well as the experimental environment on different datasets are detailed in the \ref{sec_environ} and \ref{sec_ap_baseline}. Also, a detailed analysis of hyperparameters is presented in ~\ref{sec_ap_param}. Finally, we provide the source codes and demo datasets as the supplementary material for reproducibility.


\begin{figure*}[t]
        \centering
        \subfigure[AUC and loss with MeBNS ]{\includegraphics[width=2.2in]{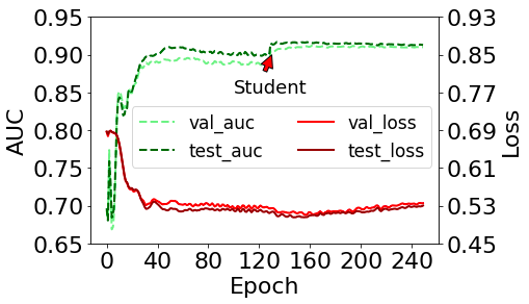}}
        \subfigure[Meta weight analysis]{\includegraphics[width=2.2in]{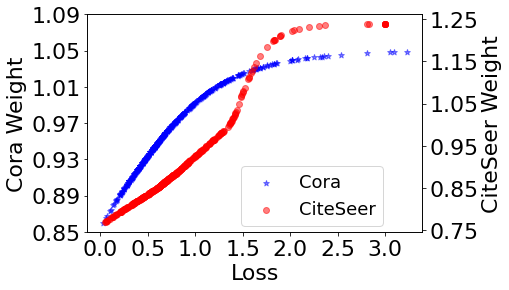}
        \label{fig:cora_cs_weight}}
        \subfigure[Migration of MeBNS]{\includegraphics[width=1.8in]{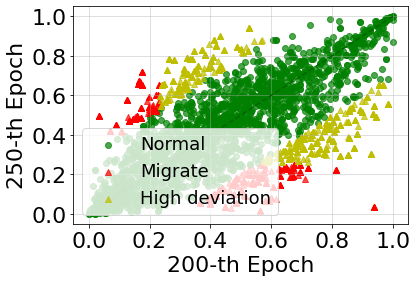}} 
        \caption{Qualitative analysis of MeBNS. \label{fig_qua}}
\end{figure*}

\subsection{Overall Performance}
With the GCN as the link prediction backbone, we report the experimental results w.r.t. Hits@K of MeBNS and baseline negative samplers on six datasets in Table~\ref{tab:main_expert} (Performance comparison with the AUC metric is reported in \ref{sec_auc_metric}.). Moreover, we also present the comparison results under different link prediction backbones in Table~\ref{tab:bcakbone}. The major observations are summarized as follows:
\begin{itemize}[leftmargin=*]
    \item \textbf{Outstanding Performance} MeBNS  consistently and significantly outperform all baselines on all datasets, achieves performance gains over the best baseline by  9.75\%, 10.07\%, 34.62\%, 22.15\%, 3.24\% and 14.98\% on the six datasets w.r.t. the metric Hit\@20 respectively. The results indicate the effectiveness of MeBNS for exploring and exploiting informative negatives.

    \item \textbf{General Plugin for Negative Samplers} In most cases, we could observe promising performance gains when integrating MeBNS with vanilla negative samplers. The underlying reason lies in that MeBNS overcome the bottleneck 
  of current negative samplers by alleviating the migration phenomenon and specializing in  ``hard" samples.

    \item \textbf{ Agnostic to Link Prediction Backbones} Towards existing mainstream node- (i.e., GCN and GAT) and edge-centric (i.e., SEAL and GraIL) backbones, MeBNS could be naturally adapted and achieves consistent performance improvements on three datasets. The model-agnostic property provides excellent flexibility for MeBNS being  adopted in various applications related to link prediction.  
\end{itemize}

\subsection{In-depth Analysis}
In this section, we perform a series of in-depth analyses to better understand the merits of MeBNS. 


\paratitle{Ablation Studies of MeBNS}
As the heart of the MeBNS, we take a closer look into the MST-GNN, consisting of a teacher-student design and a meta learner. 
Hence, we prepare two variants of MeBNS, namely \textbf{MeBNS-M} (i.e., MeBNS without the meta learner) and \textbf{MeBNS-O} (i.e., MeBNS without the  teacher-student design).
As reported in Fig.~\ref{fig:xiaorong} (a) \& (b), we find the overall performance is MeBNS $>$ MeBNS-M $>$ MeBNS-O. The observation reveals that MeBNS greatly benefits from our well-designed parts in MST-GNN: 
i) the teacher-student design helps alleviate the migration phenomenon (i.e., MeBNS-M $>$ MeBNS-O).
ii) the meta learner further encourages the model to distinguish ``hard'' samples in a fine-grained manner (i.e., MeBNS $>$ MeBNS-M). 


\paratitle{Analysis of UMD-Collector} To verify the validity of UMD-Collector, we collect the meta data by randomly sampling from the training set. 
Fig.~\ref{fig:xiaorong} (c) \& (d) shows that compared with random sampling, uncertainty-based meta data can better guide the training process of the student model.

\paratitle{Qualitative Analysis of MeBNS}
To have a deep insight into MeBNS, we carefully design a qualitative experiment to provide explicit evidence for MeBNS in improving the capability of link prediction. 
As shown in Fig.~\ref{fig_qua} (a), compared to the original learning curve in Fig.~\ref{fig:motivation_anly} (a), we find the loss (AUC) indeed  decreases (increases) after Epoch 100, attributed to that MeBNS alleviates the migration phenomenon with a better optimization landscape. 
On the other hand, the positive correlation between loss and meta weights in Fig.~\ref{fig_qua} (b) gives strong evidence that our meta leaner could help MeBNS locate ``hard'' samples in an adaptive way.
Further, through the experimental results in Fig.~\ref{fig_qua} (c), we observe that  MeBNS can alleviate the migration problem in DNS as the migration samples (i.e., ``\textcolor{red}{\UParrow}'') are significantly decreased. 

\section{Related Work \label{sec_ap_rel}}

\paratitle{Link Prediction}, 
Link Prediction has been deeply explored and achieves excellent performance in various scenarios. \cite{jeh2002simrank} measure the similarity of links by defining heuristic features, \cite{perozzi2014deepwalk,grover2016node2vec} present the similarity by getting latent representations and \cite{ying2018graph,graphsage,seal} learning the way through GNN methods. 
Due to the powerful expressive ability of GNN, \cite{ying2018graph,graphsage,seal} reaches the SOTA. Among them
can be roughly divided into node-centric and edge-centric methods according to whether using subgraph features. Node centric methods \cite{GCNbased,graphsage} obtain representations by modeling their respective neighborhood and self information. Unfortunately, these methods only independently characterize the target nodes and ignore the structural information in subgraph which always maintain rich interactive information. Seal \cite{seal}  extracts subgraph information by \textit{Node Labeling}. Grail \cite{grail} mainly uses the \textit{Readout} function to represent the subgraph. PaGNN \cite{pagnn} and NBFNet \cite{bellman-ford} methods automatically learn subgraph information through broadcast and aggregation operations. BUDDY \cite{buddy} represents with subgraph sketching and Neo-GNNs \cite{neo} identifies by modeling the information of common neighbors. CFLP \cite{cflp} investigates the impact of structural information from a causal perspective and deals with counterfactual samples.

\paratitle{Negative Sampling}, 
There are sundry negative samplers like heuristics based \cite{uniform1,pns},  distribution based \cite{dns1,yang2020understanding}, GAN based \cite{wang2017irgan,cai2017kbgan,wang1711graphgan} and synthesis based \cite{huang2021mixgcf,mixmethod} etc., and have achieved surprising results. Here, we mainly focus on distribution methods that have attracted attention in recent works as other methods are unstable and resource-consuming like GAN-based. Typically, distribution methods can be divided into two categories, namely static sampler and dynamic sampler in which the probability of each sample being sampled in static samplers will not vary with the iteration of the model. 
Static samplers like \cite{rendle2014bayesian,pns,sans} choose samples from a uniform distribution or heuristic indicator (i.e. popularity, degrees, k-hops). Dynamic samplers continuously modify the sampling strategy as the model iterating. \cite{dns1} samples negatives with highest scores, \cite{yang2020understanding} conclude the optimal negative sampling distribution should be positively but sub-linearly correlated to positive sampling distribution. \cite{recns} proposes \textit{Three Region} strategy based on graph and uses positive samples to aid learning.

\section{Conclusion}
In this paper, we propose the MeBNS framework, serving as a general plugin that can potentially improve current negative samplers based link predictors by alleviating the migration phenomenon and encouraging distinguishing ``hard'' samples in a fine-grained manner. 
 Extensive experiments demonstrate the effectiveness of the MeBNS framework.
 Our work imparts interesting findings about the migration phenomenon for DNS based link prediction and paves the way for future researchers to further expand upon this area, e.g., more subtle designs to close in the ideal sampling strategy $\mathcal{P}$  and more complicated graphs with multiple relations.


\clearpage
\bibliographystyle{ACM-Reference-Format}
\bibliography{reference}

\clearpage
\appendix
\section{Appendix}

\subsection{Experiment Environments \label{sec_environ}}
All experiments are conducted with the following settings:
\begin{itemize}[leftmargin=*]
    \item Operating system: CentOS Linux release 7.6
    \item CPU: Intel(R) Xeon(R) CPU E5-2682 v4 @ 2.50GHz 
    \item GPU: 1*Tesla P100
    \item Software version: Python:3.6.8; Torch:1.8.1+cu101; Torch-geometric:2.0.3; Numpy:1.19.5; Scikit-learn:0.24.2; Scipy:1.5.4
\end{itemize}



\subsection{Baseline Settings \label{sec_ap_baseline}}

\paratitle{Backbone Settings}
\label{code source}

The backbones used in MeBNS are mainly based on the public implementations in PyG which are available in the following URLs, and the overall model architecture based on the node-centric (GCN, GAT)  and edge-centric (SEAL) GNN are presented in Algorithm \ref{alg:gcn_implement}  and Algorithm \ref{alg:seal_implement}, respectively. Normally, we set the hidden channels to 128 and the output channels to 64 in GCN, the hidden channels to 128, the output channels to 64 and heads to 5 in GAT, the hidden channels to 32 and the number of layers to 3 in SEAL. 

(1) GCN/GAT: https://github.com/pyg-team/pytorch\_geometric \\ /blob/master/examples/link\_pred.py

(2) SEAL: https://github.com/pyg-team/pytorch\_geometric/blob \\ /master/examples/seal\_link\_pred.py

\begin{algorithm}[htbp]
    \caption{GCN/GAT Based Implementation with PyTorch-style Pseudocode}
    \label{alg:gcn_implement}
    \textbf{Input}: Graph $\mathcal{G}=(\mathcal{V},\mathcal{E})$, Node feature matrix $X$ 
    \begin{algorithmic}[1]
        \STATE /** \textbf{Encoder} **/ \\
        \STATE \# Replacing GCNConv with GATConv when using GAT as Backbone
        \STATE Z = pyg$.$nn$.$GCNConv(in\_channels,hidden\_channels)($\mathcal{G}$,X)
        \STATE Z = pyg$.$nn$.$GCNConv(hidden\_channels,out\_channels)($\mathcal{G}$,Z)  \\
        \STATE /** \textbf{Decoder} **/ \\
        \FOR{(u,v) in Training Samples} 
        \STATE $Out_{u,v}= z_u \cdot z_v $\\
        \STATE Loss += BCELoss($Out_{u,v},Y_{u,v}$) 
        \ENDFOR
    \end{algorithmic}
\end{algorithm}

\begin{algorithm}[htbp]
    \caption{SEAL Implementation with PyTorch-style Pseudocode}
    \label{alg:seal_implement}
    \textbf{Input}: Subgraph $\mathcal{G}_{U,V}=(\mathcal{V},\mathcal{E})$, Node labeling matrix $X_{nl}$ 
    \begin{algorithmic}[1]
        \STATE /** \textbf{SEAL Encoder} **/
        \STATE Z = [$X_{nl}$]
        \FOR{conv in GCNConv$*$Num\_layers}
            \STATE Z += [conv(Z[-1], edge\_index)$_.$tanh()]
        \ENDFOR
        \STATE Z = torch$_.$cat(Z[1:], dim=-1)
        \STATE Z = GlobalSortPool(Z)
        \STATE Z = Conv1d(Z)$_.$relu()
        \STATE Z = MaxPool1d(Z)
        \STATE Z = Conv1d(Z)$_.$relu()
        \STATE /** \textbf{SEAL Decoder} **/
        \STATE Out = MLP(Z)
        \STATE Loss = BCELoss($Out,Y$) 
    \end{algorithmic}
\end{algorithm}

\paratitle{Hyperparameters Settings }

For MeBNS, we use the GNN architectures that are similar to the above settings in Sec. \ref{code source}, and we adopt an Adam optimizer in MST-GNN with setting learning rate to 0.01 while an SGD optimizer with learning rate 0.01. All hyperparameters settings are showed in Table~\ref{tab:hyper-params}:

\begin{table}[htbp]
    \caption{Hype-parameters settings. $T_{node}$ and $T_{edge}$ denote the number of iterations of the node-centric (GCN, GAT) and edge-centric (SEAL) teacher GNN, respectively.}
    \centering
    \begin{tabular}{l|rrrrr}
    
    \toprule
        Parameters & $T_{node}$ & $T_{edge}$ & $\beta$ & $\delta$ & $\tau$ \\
        \hline
        Cora & 100 &15 & 50\% & 5\% & $2*10^{-5}$  \\
        \hline
        CiteSeer & 200 &15 & 50\% & 30\% & $8*10^{-6}$  \\
        \hline
        Pubmed & 100 &15 & 50\% & 7\% & $8*10^{-7}$ \\
        \hline
        Amazon Photo & 200 &15 & 50\% & 3\% & $5*10^{-6}$ \\
        \hline
        Facebook & 200 &15 & 50\% & 1\% & $2*10^{-6}$ \\
        \hline
        OGB-DDI & 200 &10 & 10\% & 1\% & $3*10^{-6}$ \\
    \bottomrule
    \end{tabular}
    \label{tab:hyper-params}
\end{table}


More specifically, we present the probability of each sample being sampled as $p(u|v) \propto \text{Deg}(v)^{0.75}$ for PNS, which is commonly used in previous works \cite{yang2020understanding}. 
For SANS and RecNS, we set the subgraph hop $K=3$. In particular, unlike the recommendation scenario, there is no exposure information in link prediction datasets, thus we just replace RecNS with RecNS-O.

\subsection{Performance Comparison with the AUC Metric \label{sec_auc_metric}}
As AUC is an important evaluation metric in link prediction tasks, we also present the performance comparison w.r.t AUC in Table~\ref{tab:auc_expert} indicating that MeBNS gains on most  datasets with various negative sampler.

\subsection{Parameter Analysis  \label{sec_ap_param}}
Since MeBNS involves several important parameters, i.e., the number of iterations of teacher GNN $T$, the ratio of ``hard'' samples for student GNN $\beta$, the probability of graph based negative sampler $\delta$, we investigate their impacts in the following categories. We use the performance on Cora and CiteSeer as a reference, and similar patterns have been observed on the other datasets.

Firstly, we vary $T$ in the set of \{50, 100, 150, 200, 250, 300\}. As shown in Fig.~\ref{fig:split_time}, MeBNS achieves the optimal performance when the teacher GNN converges to a relatively stable state (i.e., $T = 100$ in Cora and 200 in CiteSeer). A small $T$ or a large $T$ would cause the under-fitting or over-fitting of the teacher GNN, resulting in unpromising  performance.

In terms of the parameter $\beta$, we present the results in Fig.~\ref{fig:split_ratio}. We find that MeBNS achieves the optimal performance near 50\% on both datasets and too small $\beta$ would harm the model by potentially filtering a number of informative samples.

Lastly, we run a grid search over \{0.0, 0.05, 0.1, 0.3, 0.5\} for $\delta$ and plot their performance in Fig.~\ref{fig:structure_ratio}. The optimal performance is yielded near 0.05 in Cora and 0.3 in CiteSeer showing the usefulness of graph-based negatives, although too large $\delta$ would hinder the capability of MeBNS due to the excessive involvement of false negatives. 

\begin{figure}[htbp]
        \centering
        \subfigure[Cora]{\includegraphics[width=1.6in]{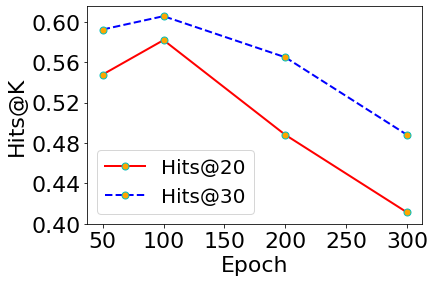}
        \label{fig:cora_split_time}}
        \subfigure[CiteSeer]{\includegraphics[width=1.6in]{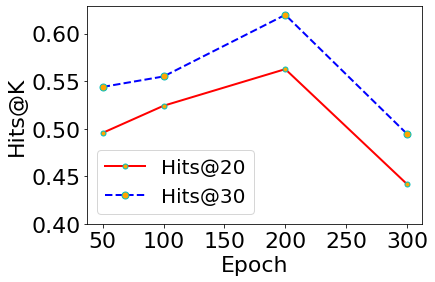}
        \label{fig:cs_split_time}}
        \caption{Impact of the number of iterations of teacher GNN.}
        \label{fig:split_time}
\end{figure}

\begin{figure}[htbp]
        \centering
        \subfigure[Cora]{\includegraphics[width=1.6in]{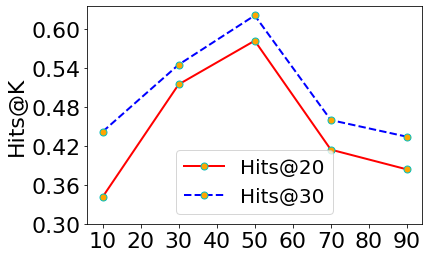}
        \label{fig:cora_split_ratio}}
        \subfigure[CiteSeer]{\includegraphics[width=1.6in]{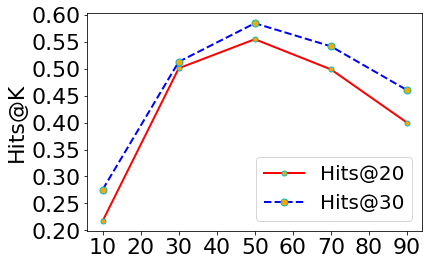}
        \label{fig:cs_split_ratio}}
        \caption{Impact of the  ratio of hard samples(\%).}
        \label{fig:split_ratio}
\end{figure}

\begin{figure}[htbp]
        \centering
        \subfigure[Cora]{\includegraphics[width=1.6in]{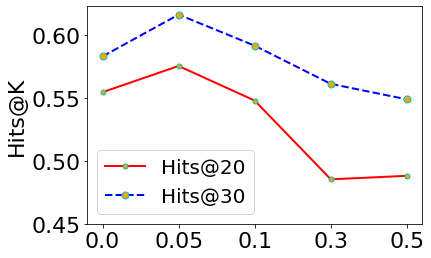}
        \label{fig:cora_structure}}
        \subfigure[CiteSeer]{\includegraphics[width=1.6in]{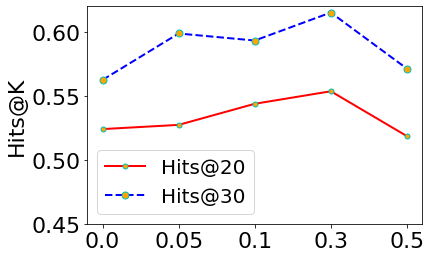}
        \label{fig:cs_structure}}
        \caption{Impact of the probability of graph based negative sampler.}
        \label{fig:structure_ratio}
\end{figure}

\subsection{Qualitative Analysis on CiteSeer \label{sec_ap_qua}}
We also conduct qualitative analysis on the CiteSeer dataset in  Fig. \ref{fig:cs_motivation_anly} and have similar findings with Sec.~\ref{analysis} that the model converges to a deeper landscape and alleviates migration with MeBNS. 
\begin{figure}[htbp]
        \centering
        \subfigure[AUC and Loss with DNS ]{\includegraphics[width=1.4in]{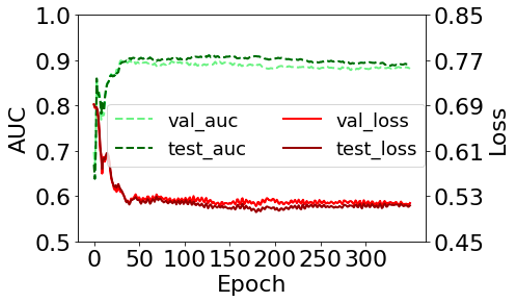}}
        \subfigure[Migration of DNS]{\includegraphics[width=1.2in]{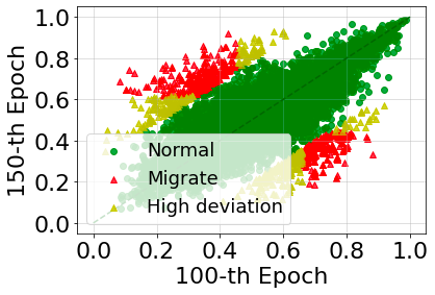}} 
        \subfigure[AUC and Loss with MeBNS ]{\includegraphics[width=1.45in]{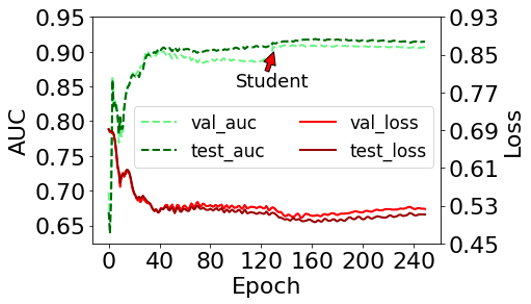}}
        \subfigure[Migration of MeBNS]{\includegraphics[width=1.2in]{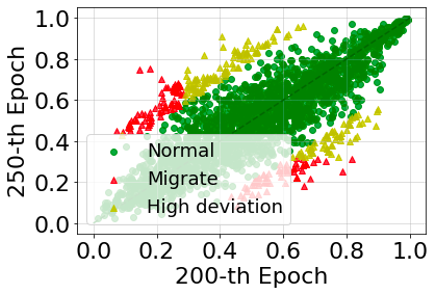}} 
        \caption{Qualitative analysis on CiteSeer.}
        \label{fig:cs_motivation_anly}
\end{figure}

\end{document}